\documentclass{article}

\usepackage[preprint]{neurips_2026}
\setcitestyle{numbers,square,comma}

\usepackage[utf8]{inputenc}
\usepackage[T1]{fontenc}
\usepackage{amsmath}
\usepackage{amsfonts}
\usepackage{amssymb}
\usepackage{amsthm}
\usepackage{mathtools}
\usepackage{bm}
\usepackage{booktabs}
\usepackage{array}
\usepackage{multirow}
\usepackage{makecell}
\usepackage{adjustbox}
\usepackage{nicefrac}
\usepackage{microtype}
\usepackage{url}
\usepackage{graphicx}
\usepackage{subcaption}
\usepackage[table]{xcolor}
\usepackage{algorithm}
\usepackage{algpseudocode}
\usepackage{pgfplots}
\usepackage{caption}
\usepackage{subcaption}

\definecolor{darkblue}{rgb}{0.0, 0.0, 0.55}

\usepackage{hyperref}
\hypersetup{
    colorlinks=true,
    linkcolor=darkblue,
    citecolor=darkblue,
    urlcolor=darkblue,
}

\definecolor{MedCoreBlue}{RGB}{225,238,250}
\definecolor{MedCoreGreen}{RGB}{226,245,228}

\theoremstyle{plain}
\newtheorem{theorem}{Theorem}[section]
\newtheorem{proposition}[theorem]{Proposition}

\theoremstyle{definition}
\newtheorem{definition}[theorem]{Definition}
\newtheorem{assumption}[theorem]{Assumption}
\theoremstyle{remark}

\newcommand{\R}{\mathbb{R}}
\newcommand{\E}{\mathbb{E}}
\newcommand{\cB}{\mathcal{B}}

\newcommand{\cD}{\mathcal{D}}
\newcommand{\cE}{\mathcal{E}}
\newcommand{\cG}{\mathcal{G}}
\newcommand{\cH}{\mathcal{H}}
\newcommand{\cL}{\mathcal{L}}
\newcommand{\cM}{\mathcal{M}}
\newcommand{\cR}{\mathcal{R}}
\newcommand{\vtheta}{\bm{\theta}}
\newcommand{\vphi}{\bm{\phi}}
\newcommand{\vx}{\bm{x}}
\newcommand{\vz}{\bm{z}}
\newcommand{\vs}{\bm{s}}
\newcommand{\vb}{\bm{b}}

\newcommand{\vn}{\bm{n}}
\newcommand{\eps}{\varepsilon}

\title{MedCore: Boundary-Preserving Medical Core Pruning for MedSAM}

\author{%
  Cenwei Zhang \\
  Shanghai Jiao Tong University\\
  Shanghai, China\\
  \texttt{cwzhang2001@gmail.com}
  \And
  Suncheng Xiang$^*$ \\
  Shanghai Jiao Tong University\\
  Shanghai, China\\
  \texttt{xiangsuncheng17@sjtu.edu.cn}
  \And
  Lei You\thanks{Corresponding author.} \\
  Technical University of Denmark\\
  Ballerup, Denmark\\
  \texttt{leiyo@dtu.dk}
}

\begin{document}

\maketitle

\begin{abstract}
Medical segmentation foundation models such as SAM and MedSAM provide strong prompt-driven segmentation, but their image encoders are still too large for many clinical settings. Compression is also risky in medicine because a model can keep high Dice while losing boundary fidelity. We propose \textbf{MedCore}, a structured pruning framework for MedSAM. The main idea is to preserve two kinds of structures: structures that became important during SAM-to-MedSAM adaptation, and structures that have high boundary leverage. We identify the first type by a dual-intervention score that compares zeroing a group with resetting it to its original SAM weight. We identify the second type by boundary-aware Fisher estimation. We also introduce a boundary leverage principle, which shows that compression-induced boundary displacement is controlled by logit perturbation on the boundary divided by the logit spatial gradient. This principle explains why boundary metrics can degrade even when Dice remains high. On polyp segmentation benchmarks, MedCore reduces parameters by 60.0\% and FLOPs by 58.4\% while achieving Dice 0.9549, Boundary F1 0.6388, and HD95 5.14 after recovery fine-tuning. It also reaches \textbf{86.6\%} parameter reduction and \textbf{90.4G} FLOPs with strong boundary quality. Our analysis further shows that MedSAM lies in a head-fragile boundary regime: head-pruning steps have 2.887 times larger 95th-percentile boundary leverage than MLP-pruning steps, and this logit-level effect is consistent with BF1 and HD95 degradation. Our code is available at \url{https://github.com/cenweizhang/MedCore}.

\end{abstract}

\section{Introduction}

Segment Anything Model (SAM) changed image segmentation by turning segmentation into a prompt-driven task \citep{kirillov2023segment}. MedSAM adapts this idea to medical images and gives strong results across many modalities \citep{ma2024segment}. These models are useful because they can accept prompts such as bounding boxes and produce high-quality masks. Their cost is still high. The MedSAM image encoder is a ViT-based model~\cite{dosovitskiy2021imageworth16x16words}, and its inference cost can be difficult to support in real-time endoscopy, point-of-care ultrasound, and other resource-limited settings.

A natural solution is model compression. Yet medical segmentation has a failure mode that ordinary compression metrics can hide. A compressed model may still identify the coarse lesion region and keep a high Dice score, while its predicted boundary becomes thick, broken, or shifted. This is a serious issue because boundary quality affects lesion measurement, treatment planning, and clinical interpretation. This paper studies structured pruning for MedSAM under a boundary-sensitive setting, directly pruning the existing adapted model rather than training a lightweight surrogate. 
Since MedSAM is adapted from SAM instead of trained from scratch, we exploit the transition from the SAM checkpoint $\vtheta^S$ to the MedSAM checkpoint $\vtheta^M$ to identify which structures remain general-purpose and which become medical-specific.

We propose \textbf{MedCore}, a boundary-preserving medical core pruning framework that keeps a compact yet medically adapted MedSAM subnetwork. For each structure
group $g$, MedCore uses a dual-intervention score that combines its current prediction contribution under zeroing and its medical-adaptation contribution under reset to the original SAM weights, which are two counterfactual questions. A group is safe to prune only when both signals suggest low importance.

We further introduce a boundary leverage principle by viewing the predicted mask boundary as a level set of the logit map. If pruning changes the logit map by $\delta_G(\vx)$, the first-order boundary displacement is proportional to $\delta_G(\vx)$ and inversely proportional to the spatial gradient norm $\|\nabla s_{\vtheta}(\vx)\|_2$. This gives a direct geometric explanation of boundary collapse. Compression is dangerous when it creates a large logit perturbation on a boundary where the logit slope is small. We use this principle to motivate boundary-aware Fisher scoring and to analyze the different roles of attention heads and MLP connections.

In summary, Our three contributions are as follows: First, we formulate MedSAM pruning as medical-core extraction and introduce a dual-intervention score based on zeroing and reset-to-SAM operations. Second, we propose boundary-aware Fisher and Cross-Fisher approximations to make
the intervention score practical and boundary-sensitive.
Third, we introduce boundary leverage as a simple geometric principle for explaining Dice-boundary mismatch and head-MLP pruning trade-offs.

Our results further show that this distinction matters. Experiments on polyp endoscopy, breast ultrasound, and dermoscopy demonstrate that MedCore can remove
a large fraction of MedSAM parameters while preserving strong region and boundary performance. In a $7 \times 7$ head-MLP pruning sweep, head-pruning steps exhibit $2.887\times$ larger 95th-percentile boundary leverage than MLP-pruning steps, with all 15 valid paired comparisons favoring MLP pruning. At the metric level, the BF1 and HD95 damage densities of head pruning are respectively $2.607\times$
and $2.432\times$ those of MLP pruning, indicating that MedSAM lies in a head-fragile boundary regime: attention heads are not merely low-parameter components, but can have high boundary leverage.

\section{Related Work}

\textbf{Promptable segmentation and medical SAM.}
SAM formulates segmentation as a promptable task using a ViT image encoder, a prompt encoder, and a lightweight mask decoder \citep{kirillov2023segment,dosovitskiy2021vit}. MedSAM adapts this design to medical images via large-scale image-mask pairs \citep{ma2024segment}. Subsequent work improves medical adaptation along several axes: parameter-efficient tuning (SAMed, SAM-Med2D, Medical-SAM-Adapter \citep{zhang2023samed,cheng2023sammed2d,wu2023medicalsamadapter}; Polyp-SAM for transfer to polyp segmentation \citep{li2023polypsam}), boundary-aware feature design (I-MedSAM with implicit representations \citep{wei2024imedsam}; LDFSAM with distilled feature prompting \citep{zhao2026ldfsam}), 3D/video extension (MedSAM2 \citep{ma2025medsam2}), and feature fusion with task-specific networks \citep{zhong2025medsam2featurefusion,mu2026sam2polypnet}. These methods focus on prompting, fusion, or boundary localization. We study a complementary question: how to \emph{compress} an already adapted MedSAM checkpoint while preserving the structures that became important during SAM-to-MedSAM adaptation.


\textbf{Efficient SAM and medical segmentation models.}
The heavy ViT image encoder of SAM has stimulated extensive research on efficient
segmentation, including lightweight encoder replacement, knowledge distillation,
quantization, and structured pruning
\citep{zhao2023fastsam,xiong2023efficientsam,zhou2025edgesam,chen2024slimsam,lu2025qmedsam}. Medical settings also have efficient segmentation backbones, including U-Net, nnU-Net, Swin-Unet, EMCAD, MK-UNet, and quantized medical SAM variants \citep{ronneberger2015unet,isensee2021nnunet,swinunet,rahman2024emcad,rahman2025mkunet,lu2025qmedsam}.Recent lightweight polyp models, such as HSSAM-Net, further show that careful multi-scale aggregation and boundary-aware enhancement can give strong accuracy with small parameter budgets \citep{feng2025hssamnet}. These works confirm the need for efficient segmentation. Most of them redesign the architecture, train a compact surrogate, or quantize a model. MedCore instead directly prunes MedSAM and uses both the original SAM weights $\vtheta^S$ and the adapted MedSAM weights $\vtheta^M$.

\textbf{Pruning and Transformer compression.}
Classic pruning methods estimate the loss increase caused by removing weights or groups. Optimal Brain Damage and Optimal Brain Surgeon use second-order information to approximate this increase \citep{lecun1989obd,hassibi1993obs}. Deep Compression combines pruning, quantization, and coding \citep{han2015deepcompression}. Later methods use movement, Fisher information, Hessian-aware saliency, or group-level sensitivity to prune pretrained and Transformer models \citep{sanh2020movement,liu2021groupfisher,yang2023nvit}. Attention-head pruning studies show that many heads can be redundant, but some specialized heads are important and task-dependent \citep{michel2019sixteen,voita2019analyzing}. This literature motivates structured pruning, but it usually estimates importance from one endpoint checkpoint.

\textbf{Boundary preservation in medical segmentation.}
Medical segmentation models often optimize region losses such as Dice and cross-entropy. These losses can keep coarse foreground overlap but still miss fine contours. To address this limitation, prior work has explored boundary losses, surface- or Hausdorff-inspired objectives, topology-aware regularization, and boundary-aware architectures for more accurate contour modeling \citep{kervadec2019boundary,karimi2019hausdorff,shit2021cldice,shao2024polyper,li2024bmfanet,liu2024fcanet,tong2026begaunet}. Different from these task-specific segmentation methods, our work brings boundary preservation into model compression by asking not only whether pruning preserves Dice, but also whether it perturbs boundary-critical structures and the boundary logit field.

\section{MedCore: Boundary-Preserving Medical Core Pruning}

MedCore has two design goals. The first goal is to preserve structures that became important during medical adaptation. The second goal is to preserve structures whose removal causes large boundary displacement.

\begin{figure*}[t]
    \centering
    \includegraphics[width=\textwidth]{figures/overview.png}
    \caption{MedCore pruning overview. The Cross-Fisher signal $F^{\mathrm{cross}}$ 
combines SAM and MedSAM gradients to score attention heads and MLP channels 
via $Q^{(r)}_g$, driving block-sensitive cascade pruning of the ViT encoder 
(blocks B10--B11 protected) with boundary- and frequency-aware recovery. 
Yellow gears mark where pruning operations are applied: Cross-Fisher 
estimation, priority computation, per-block head pruning, and 
recovery-loss-driven fine-tuning.}
    \label{fig:overview}
\end{figure*}

\subsection{Problem Formulation}

A training or calibration sample is written as$\vz=(I,\vb,M)$, where $I\in \R^{3\times H\times W}$ is a medical image, $\vb\in \R^4$ is a bounding-box prompt, and $M\in\{0,1\}^{H\times W}$ is a binary mask. MedSAM maps $(I,\vb)$ to a logit map $s_{\vtheta,\vphi}(I,\vb)$ and a probability mask $\widehat M=\sigma(s_{\vtheta,\vphi}(I,\vb))$. The parameter vector $\vtheta=(\vtheta_E,\vtheta_D)$ contains the image encoder and mask decoder. The prompt encoder parameter $\vphi$ is frozen.

Given a calibration set $\cD_{\mathrm{cal}}=\{\vz_n\}_{n=1}^{N}$, we define the empirical medical risk as
\begin{equation}
\widehat{\cR}(\vtheta)
=
\frac{1}{N}\sum_{n=1}^{N}\ell_{\mathrm{seg}}(\vz_n;\vtheta),
\qquad
\ell_{\mathrm{seg}}=\ell_{\mathrm{Dice}}+\ell_{\mathrm{BCE}} .
\label{eq:empirical_risk}
\end{equation}
We partition pruneable parameters into structured groups $\cG=\{g_1,\ldots,g_K\}$. In our implementation, groups are attention heads and MLP connection groups in the ViT image encoder. A binary group mask $\vs\in\{0,1\}^K$ gives the masked parameter $\vs\odot_{\cG}\vtheta$. If $c_g$ is the cost of group $g$, the ideal pruning problem is
\begin{equation}
\min_{\vtheta,\vs}\ \widehat{\cR}(\vs\odot_{\cG}\vtheta)
\quad
\mathrm{s.t.}
\quad
\sum_{g\in\cG}s_g c_g\le B .
\label{eq:ideal_pruning}
\end{equation}
This problem is combinatorial. MedCore approximates it by scoring groups and removing low-score groups under a block-aware budget. We keep two checkpoints: $\vtheta^S$ is the original SAM checkpoint, and $\vtheta^M$ is the MedSAM checkpoint. Pruning starts from $\vtheta^M$, while $\vtheta^S$ serves as the adaptation reference.

\subsection{Boundary Leverage Principle}

We now state the geometric principle that motivates boundary-aware pruning. For a fixed image and prompt, we write the logit map as $s_{\vtheta}(\vx)$, where $\vx$ indexes a spatial location. The predicted boundary is the zero level set of this logit map. A compression operation $G$ changes the model to $\vtheta^{-G}$ and induces the logit perturbation $\delta_G(\vx)=s_{\vtheta^{-G}}(\vx)-s_{\vtheta}(\vx)$. The following assumption is standard in level-set perturbation analysis. It only says that the boundary is not locally flat in the logit field.

\begin{assumption}[Regular boundary]
\label{ass:regular_boundary}
For a given image and prompt, the logit map $s_{\vtheta}$ is continuously differentiable in a neighborhood of its zero level set $\Gamma_{\vtheta}=\{\vx:s_{\vtheta}(\vx)=0\}$. There exists $\kappa>0$ such that $\|\nabla s_{\vtheta}(\vx)\|_2\ge \kappa$ for every $\vx\in\Gamma_{\vtheta}$.
\end{assumption}

\begin{definition}[Boundary leverage]
\label{def:boundary_leverage}
Let $G$ be a compression operation with compression gain $\Delta C_G>0$. Let $\cB$ be a narrow boundary band around a ground-truth or predicted boundary. We define the boundary leverage of $G$ as
\begin{equation}
\lambda_G^{\mathrm{bd}}
=
\frac{1}{\Delta C_G}
\E_{\vx\in\cB}
\left[
\frac{|\delta_G(\vx)|}{\|\nabla s_{\vtheta}(\vx)\|_2+\eps}
\right].
\label{eq:boundary_leverage}
\end{equation}
Here $\eps>0$ is a small constant. This quantity estimates the boundary displacement caused by one unit of compression.
\end{definition}

\begin{theorem}[First-order boundary perturbation]
\label{thm:boundary_perturbation}
Under Assumption~\ref{ass:regular_boundary}, suppose that $\delta_G$ is sufficiently small in a neighborhood of $\Gamma_{\vtheta}$. For each $\vx\in\Gamma_{\vtheta}$, let $u_G(\vx)$ be the signed normal displacement from the old boundary to the new boundary induced by $G$. Then
\begin{equation}
 u_G(\vx)
=
-
\frac{\delta_G(\vx)}{\|\nabla s_{\vtheta}(\vx)\|_2}
+O(\|\delta_G\|_{C^1}^2).
\label{eq:boundary_displacement}
\end{equation}
\end{theorem}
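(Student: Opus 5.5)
The plan is to parametrize the perturbed boundary along the unit normal of $\Gamma_{\vtheta}$ and reduce the claim to a one-dimensional implicit-function argument. Fix $\vx\in\Gamma_{\vtheta}$ and set $\vn(\vx)=\nabla s_{\vtheta}(\vx)/\|\nabla s_{\vtheta}(\vx)\|_2$. This is well defined because Assumption~\ref{ass:regular_boundary} gives $\|\nabla s_{\vtheta}(\vx)\|_2\ge\kappa>0$. The perturbed logit is $s_{\vtheta^{-G}}=s_{\vtheta}+\delta_G$. The signed normal displacement $u=u_G(\vx)$ is defined as the (small) root of
\[
F(u,\delta):=s_{\vtheta}(\vx+u\vn)+\delta(\vx+u\vn)=0 ,
\]
where $\delta$ ranges over a $C^1$ neighborhood of $0$ near $\Gamma_{\vtheta}$. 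At $(u,\delta)=(0,0)$ we have $F=0$ and $\partial_u F=\nabla s_{\vtheta}(\vx)\cdot\vn=\|\nabla s_{\vtheta}(\vx)\|_2\ge\kappa$.

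The first step is existence and uniqueness of $u$. For $\|\delta_G\|_{C^1}$ small, $\partial_u F(u,\delta_G)\ge\kappa/2$ on an interval $|u|\le r$. Here $r$ is chosen by uniform continuity of $\nabla s_{\vtheta}$ on a compact tubular neighborhood of $\Gamma_{\vtheta}$; I assume $\Gamma_{\vtheta}$ is bounded, which holds since the image domain is bounded. Then $F(\cdot,\delta_G)$ is strictly increasing on $[-r,r]$. Since $|F(0,\delta_G)|=|\delta_G(\vx)|\le\|\delta_G\|_{C^1}$, a unique root exists once $\|\delta_G\|_{C^1}<\kappa r/2$. The same monotonicity estimate gives the a priori bound $|u|\le 2|\delta_G(\vx)|/\kappa=O(\|\delta_G\|_{C^1})$.

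The second step is the expansion. Taylor expansion in $u$ gives
\[
0=s_{\vtheta}(\vx)+u\,\|\nabla s_{\vtheta}(\vx)\|_2+R_1+\delta_G(\vx)+R_2 .
\]
The first term vanishes. The remainder $R_2=\delta_G(\vx+u\vn)-\delta_G(\vx)$ satisfies $|R_2|\le\|\delta_G\|_{C^1}|u|=O(\|\delta_G\|_{C^1}^2)$. Solving for $u$ and dividing by $\|\nabla s_{\vtheta}(\vx)\|_2\ge\kappa$ yields \eqref{eq:boundary_displacement}.

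The main obstacle is the remainder $R_1=s_{\vtheta}(\vx+u\vn)-s_{\vtheta}(\vx)-u\|\nabla s_{\vtheta}(\vx)\|_2$. Under mere $C^1$ regularity it is only $o(|u|)$, not $O(u^2)$. To obtain the stated quadratic error I would read the hypothesis as requiring $\nabla s_{\vtheta}$ to be Lipschitz (e.g.\ $s_{\vtheta}\in C^{1,1}$ or $C^2$) on the tubular neighborhood, with constant $L$. Then $|R_1|\le L u^2/2=O(\|\delta_G\|_{C^1}^2)$, with constants depending on $\kappa$ and $L$. If only $C^1$ is available, the same argument gives $o(\|\delta_G\|_{C^1})$, and I would note this weaker form.
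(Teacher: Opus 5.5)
Your proposal follows the same route as the paper's proof in Appendix~\ref{app:proofs}. Both parametrize the new boundary point as $\vx+u_G(\vx)\vn(\vx)$, Taylor-expand $s_{\vtheta}$ and $\delta_G$ along the normal, use $u_G=O(\|\delta_G\|_{C^1})$, and solve the linear part. Your additions are correct and make the argument more precise: the monotonicity bound $|u|\le 2|\delta_G(\vx)|/\kappa$ gives an explicit version of the paper's bare appeal to the implicit function theorem, and you are right that the paper's $O(u_G(\vx)^2)$ remainder for $s_{\vtheta}$ tacitly needs a Lipschitz gradient (e.g.\ $s_{\vtheta}\in C^{1,1}$), which Assumption~\ref{ass:regular_boundary} alone does not provide, so under mere $C^1$ only the $o(\|\delta_G\|_{C^1})$ form holds.
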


Theorem~\ref{thm:boundary_perturbation} explains why boundary metrics can fail before Dice fails. Boundary displacement depends on the logit perturbation at the boundary and on the local logit slope. A group with small parameter count can still be dangerous if it causes a large boundary logit perturbation. We prove the theorem in Appendix~\ref{app:proofs}.

\begin{proposition}[Boundary-aware budget rule]
\label{prop:budget_rule}
Let $c_H$ and $c_M$ be compression budgets assigned to heads and MLPs, and let $\cE_{\mathrm{bd}}(c_H,c_M)$ be a locally differentiable boundary error. If
\[
\frac{\partial \cE_{\mathrm{bd}}}{\partial c_H}
>
\frac{\partial \cE_{\mathrm{bd}}}{\partial c_M}
\]
at a feasible interior allocation, then moving a small amount of compression budget from head pruning to MLP pruning decreases boundary error to first order while keeping the total compression fixed.
\end{proposition}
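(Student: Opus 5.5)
The argument is a one-variable first-order Taylor expansion along the direction that keeps total compression fixed. Let $(c_H^\ast,c_M^\ast)$ be the feasible interior allocation, and write the total compression as $c_H+c_M$. If the paper's cost units differ between heads and MLPs, I would first rescale so that both budgets are in common compression units; the budgets are already expressed as compression amounts, so I take them additive. Transferring an amount $t>0$ of budget from heads to MLPs gives the path
\[
\gamma(t)=(c_H^\ast-t,\;c_M^\ast+t).
\]
This path keeps $c_H+c_M$ constant. Interiority guarantees that $\gamma(t)$ stays feasible for all sufficiently small $t$.

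Next, define $h(t)=\cE_{\mathrm{bd}}(\gamma(t))$. Local differentiability of $\cE_{\mathrm{bd}}$ at $(c_H^\ast,c_M^\ast)$ is meant in the Fréchet sense; if only partial derivatives were available, I would add this as a hypothesis. By the chain rule,
\[
h'(0)=-\frac{\partial \cE_{\mathrm{bd}}}{\partial c_H}+\frac{\partial \cE_{\mathrm{bd}}}{\partial c_M}.
\]
The hypothesis of Proposition~\ref{prop:budget_rule} says exactly that this quantity is strictly negative.

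Differentiability then gives
\[
\cE_{\mathrm{bd}}(\gamma(t))=\cE_{\mathrm{bd}}(c_H^\ast,c_M^\ast)+t\,h'(0)+o(t).
\]
Hence for all sufficiently small $t>0$ we have $\cE_{\mathrm{bd}}(\gamma(t))<\cE_{\mathrm{bd}}(c_H^\ast,c_M^\ast)$. This is the claimed first-order decrease at fixed total compression.

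The only delicate point is the modeling step, not the calculus. It is making precise that "total compression fixed" means $c_H+c_M$ constant, and that differentiability is strong enough for the directional derivative to equal the gradient inner product. I would state both explicitly at the start.

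Optionally, I would add a link to Theorem~\ref{thm:boundary_perturbation}. The marginal boundary error per unit budget is approximated by the boundary leverage $\lambda_G^{\mathrm{bd}}$ of the corresponding pruning steps. The inequality in the proposition is then the statement that head-pruning steps have larger boundary leverage than MLP-pruning steps. This connection would be a motivating remark, not part of the proof.
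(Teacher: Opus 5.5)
Your proposal is correct and follows the paper's proof: both shift a small amount of budget along the direction $(-1,+1)$, Taylor-expand to first order to get a change $\eta\bigl(\partial \cE_{\mathrm{bd}}/\partial c_M - \partial \cE_{\mathrm{bd}}/\partial c_H\bigr) + o(\eta)$, and conclude it is negative for small $\eta>0$. Your explicit requirements that differentiability be total (Fr\'echet) rather than merely partial, and that interiority keeps the perturbed allocation feasible, are assumptions the paper uses only implicitly, so stating them makes the same argument more precise.
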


Proposition~\ref{prop:budget_rule} turns boundary leverage into a budget principle. It does not say that heads are always more important than MLPs. It says that the safer compression direction is the one with lower marginal boundary damage. Section~\ref{sec:boundary_leverage_analysis} verifies that, for MedSAM, head-pruning steps have much higher boundary leverage than MLP-pruning steps.

\subsection{Medical-Core Scoring via Dual Intervention}

The boundary leverage principle says which structures are dangerous for boundaries. We also need to preserve structures that became important during medical adaptation. MedSAM has a useful reference checkpoint, namely the original SAM checkpoint $\vtheta^S$. We therefore score each group $g$ by two counterfactual interventions.

The zero intervention sets group $g$ to zero while keeping all other groups unchanged. If $T_g^0$ denotes this operation, the exact zero cost is
\begin{equation}
\Delta_g^{\mathrm{zero}}
=
\widehat{\cR}(\vs\odot_{\cG}T_g^0(\vtheta^M))
-
\widehat{\cR}(\vs\odot_{\cG}\vtheta^M).
\label{eq:zero_intervention}
\end{equation}
A small value means that $g$ contributes little to current MedSAM predictions. 

The reset intervention replaces group $g$ in MedSAM by its SAM value. If $T_g^S$ denotes this operation, the exact reset cost is
\begin{equation}
\Delta_g^{\mathrm{reset}}
=
\widehat{\cR}(\vs\odot_{\cG}T_g^S(\vtheta^M))
-
\widehat{\cR}(\vs\odot_{\cG}\vtheta^M).
\label{eq:reset_intervention}
\end{equation}
A large reset cost means that the change from $\vtheta_g^S$ to $\vtheta_g^M$ is functionally important for medical segmentation. This signal is different from the zero cost. It measures adaptation-specific importance rather than only endpoint importance. We combine both costs as
\begin{equation}
Q_g
=
\alpha_{b(g)}\widehat\Delta_g^{\mathrm{zero}}
+
(1-\alpha_{b(g)})\widehat\Delta_g^{\mathrm{reset}},
\label{eq:fused_score}
\end{equation}
where $b(g)$ is the Transformer block containing $g$. We use a block-level mixture weight because different blocks respond differently to boundary and adaptation signals. The pruning priority is
\begin{equation}
P_g=\frac{Q_g}{(c_g+\eps)^{\tau}},
\label{eq:priority}
\end{equation}
where $\tau\ge 0$ controls cost normalization. Groups with smaller $P_g$ are pruned first.

\subsection{Boundary-Aware Fisher and Cross-Fisher Approximation}

Computing Eq.~\eqref{eq:zero_intervention} and Eq.~\eqref{eq:reset_intervention} exactly for every group is expensive. We use a local second-order approximation. Around $\vtheta^M$, zeroing group $g$ corresponds to the perturbation $-\vtheta_g^M$, and resetting group $g$ corresponds to the perturbation $\vtheta_g^S-\vtheta_g^M$. We approximate the Hessian by diagonal Fisher information.

The loss used for Fisher estimation should reflect the metric we want to preserve. We therefore use a boundary-weighted loss. Let $B(M)$ be a morphological boundary map computed by dilation minus erosion. With pixel index $u$, we define
\begin{equation}
\ell_{\mathrm{bd}}(\vz;\vtheta)
=\sum_{u}
\left(1+\lambda_{\mathrm{bd}}B(M)_u\right)
\ell_{\mathrm{BCE}}(s_{\vtheta,\vphi}(I,\vb)_u,M_u) \\
+\ell_{\mathrm{Dice}}(\widehat M,M).
\label{eq:boundary_loss}
\end{equation}
For a sub-distribution $r$, such as one dataset or modality, the MedSAM Fisher is
\begin{equation}
F^M_{r,i}
=
\frac{1}{|\cD_{\mathrm{cal}}^{(r)}|}
\sum_{\vz_n\in\cD_{\mathrm{cal}}^{(r)}}
\left(
\frac{\partial \ell_{\mathrm{bd}}(\vz_n;\vtheta^M)}{\partial \theta_i}
\right)^2 .
\label{eq:medsam_fisher}
\end{equation}
This gives the zero approximation
\[
\widehat\Delta_g^{\mathrm{zero},(r)}
=
\frac{1}{2}\sum_{i\in g}F^M_{r,i}(\theta_i^M)^2.
\]

For the reset cost, the perturbation is the SAM-to-MedSAM parameter change. We also compute the SAM Fisher $F_i^S$ on the same calibration data and use a geometric mean,
\begin{equation}
F^{\mathrm{cross}}_{r,i}
=
\sqrt{F^M_{r,i}F^S_i+\eps_F}.
\label{eq:cross_fisher}
\end{equation}
The reset approximation becomes
\begin{equation}
\widehat\Delta_g^{\mathrm{reset},(r)}
=
\frac{1}{2}\sum_{i\in g}
F^{\mathrm{cross}}_{r,i}(\theta_i^M-\theta_i^S)^2.
\label{eq:reset_approx}
\end{equation}
The geometric mean down-weights parameters that are insensitive in either SAM or MedSAM. This prevents the reset score from being dominated by large but functionally irrelevant weight shifts.

\subsection{Distribution-Aware Aggregation and Budget Allocation}

Medical data are heterogeneous. A structure that appears redundant on one dataset may be important on another dataset. We compute per-distribution scores $\widehat Q_g^{(r)}$ using Eq.~\eqref{eq:fused_score} and aggregate them as
\begin{equation}
\widehat Q_g^{\mathrm{dist}}
=
\sum_{r=1}^{R}\pi_r\widehat Q_g^{(r)}
+\beta\,\mathrm{Var}_{r}[\widehat Q_g^{(r)}],
\label{eq:dist_score}
\end{equation}
where $\pi_r\ge 0$ and $\sum_r\pi_r=1$. The variance term makes pruning more conservative for groups that are important on only some sub-distributions.

We also allocate pruning budgets non-uniformly across Transformer blocks. Let $\cB_{\ell}$ be the parameter set of block $\ell$. We estimate block sensitivity by
\[
S_{\ell}=\sum_{i\in\cB_{\ell}}F_i,
\qquad
F_i=\sum_r\pi_rF^M_{r,i}.
\]
Blocks with high $S_{\ell}$ receive smaller pruning quotas. In the main one-time pruning experiments, the deepest blocks are protected because they are closest to the mask decoder and have the highest Fisher sensitivity. In the extreme sequential setting, we apply a second conservative pruning pass to previously protected blocks after the first checkpoint has been selected.

\subsection{Head-to-MLP Cascade and Recovery Fine-Tuning}

MedCore prunes in a cascade. We first prune attention heads. Then we run a short recovery phase. We then prune MLP connection groups from the head-pruned model. This schedule avoids removing two different functional families at the same time. It also lets us analyze the marginal boundary effect of head and MLP pruning separately.

After pruning, we recover the remaining weights with a lightweight objective:
\begin{equation}
\cL_{\mathrm{rec}}
=
\cL_{\mathrm{seg}}
+\lambda_1\cL_{\mathrm{bd}}
+\lambda_2\cL_{\mathrm{feat}}
+\lambda_3\cL_{\mathrm{logit}}
+\lambda_4\cL_{\mathrm{freq}}.
\label{eq:recovery_loss}
\end{equation}
Here $\cL_{\mathrm{bd}}$ is boundary-weighted BCE, $\cL_{\mathrm{feat}}$ distills encoder features from the unpruned MedSAM teacher, $\cL_{\mathrm{logit}}$ distills boundary-region logits, and $\cL_{\mathrm{freq}}$ penalizes high-frequency mask discrepancies. Recovery is not used to define the pruning score. It lets the remaining structures re-coordinate after structural removal.

\section{Experiments}

\subsection{Experimental Setup}

We evaluate MedCore on MedSAM with a ViT-B image encoder. The base model is the official MedSAM checkpoint, and the reference checkpoint is SAM ViT-B. We freeze the prompt encoder in all experiments. We use five medical image segmentation datasets across three modalities: CVC-ClinicDB~\cite{BERNAL201599}, CVC-ColonDB~\cite{https://doi.org/10.1155/2017/4037190}, and Kvasir-SEG~\cite{10.1007/978-3-030-37734-2_37} for polyp endoscopy, BUSI~\cite{ALDHABYANI2020104863} for breast ultrasound, and ISIC2018~\cite{Tschandl_2018, codella2019skinlesionanalysismelanoma} for dermoscopy. Each dataset is split into training and test sets with a fixed random seed. For Fisher estimation, we sample 128 calibration images per dataset. We use batch size 1 for Fisher estimation to preserve per-sample gradients.

We report Dice, Intersection over Union (IoU), Boundary F1 (BF1), and 95th-percentile Hausdorff distance (HD95). We also report parameters and FLOPs. BF1 and HD95 are the key metrics for boundary fidelity. Unless stated otherwise, the main MedCore configurations include post-pruning recovery fine-tuning. The component ablation in Section~\ref{sec:component_ablation} is reported without post-pruning fine-tuning for the one-time settings, so that it isolates the direct effect of each pruning component.

\subsection{Main Results on Polyp Segmentation}

Table~\ref{tab:main_results} reports macro-averaged results on the three polyp benchmarks. MedCore reaches strong compression while preserving boundary quality after recovery. The h50\_m70 configuration removes 60.0\% of parameters and reduces FLOPs by 58.4\%, while achieving Dice 0.9549, BF1 0.6388, and HD95 5.14. The h70\_m95 configuration removes 72.2\% of parameters and still obtains BF1 0.6259 and HD95 6.78. The sequential configuration h84\_m95 reaches 12.1M parameters and 90.4G FLOPs, while keeping strong region and boundary metrics.

We present these results as compressed models after the same recovery pipeline. The improvement over the original MedSAM checkpoint should be read together with this recovery setting. The main conclusion is that MedCore keeps the benefit of boundary-aware recovery under heavy compression, rather than merely keeping region-level overlap.

\begin{table}[t]
\centering
\caption{Macro-averaged results on CVC-ClinicDB~\cite{BERNAL201599}, CVC-ColonDB~\cite{https://doi.org/10.1155/2017/4037190}, and Kvasir-SEG~\cite{10.1007/978-3-030-37734-2_37}. MedCore configurations include post-pruning fine-tuning. One-time uses a single head-to-MLP cascade. Sequential starts from h70\_m95 and applies a second conservative pruning pass.}
\label{tab:main_results}
\small
\setlength{\tabcolsep}{3.5pt}
\begin{tabular}{llrrrrrr}
\toprule
Category & Method & Params$\downarrow$ & FLOPs$\downarrow$ & Dice$\uparrow$ & IoU$\uparrow$ & BF1$\uparrow$ & HD95$\downarrow$ \\
\midrule
Baseline & MedSAM \citep{ma2024segment} & 89.7M & 926.5G & 0.9191 & 0.8648 & 0.5321 & 21.29 \\
\midrule
Efficient SAM & EfficientSAM-s \citep{xiong2023efficientsam} & 26.4M & 188.0G & 0.8765 & 0.8131 & 0.4340 & 29.73 \\
 & EfficientSAM-t \citep{xiong2023efficientsam} & 10.2M & 55.9G & 0.8723 & 0.8103 & 0.4218 & 25.98 \\
 & SlimSAM \citep{chen2024slimsam} & 28.0M & 189.9G & 0.8435 & 0.7758 & 0.3922 & 34.11 \\
\midrule
Medical Seg. & Swin-Unet \citep{swinunet} & 27.2M & 5.92G & 0.7845 & 0.6933 & 0.4347 & 23.88 \\
 & nnU-Net \citep{ronneberger2015unet, isensee2021nnunet} & 46.3M & 75.1G & 0.9093 & 0.8574 & 0.5382 & 32.15 \\
 & EMCAD \citep{rahman2024emcad} & 26.8M & 10.6G & 0.9141 & 0.8608 & 0.5305 & 30.31 \\
 & MK-UNet \citep{rahman2025mkunet} & 0.32M & 0.61G & 0.8658 & 0.7986 & 0.4635 & 46.21 \\
\midrule
Medical SAM & SAMed \citep{zhang2023samed} & 91.2M & 206.7G & 0.7419 & 0.6640 & 0.3283 & 97.57 \\
 & QMedSAM \citep{lu2025qmedsam} & 9.79M & 75.7G & 0.8089 & 0.7108 & 0.4118 & 14.55 \\
\midrule
\rowcolor{MedCoreBlue} MedCore One-time & h40\_m30 & 61.3M & 620.7G & 0.9547 & 0.9173 & \textbf{0.6508} & 7.23 \\
\rowcolor{MedCoreBlue} & h50\_m70 & 35.8M & 385.2G & 0.9549 & 0.9169 & 0.6388 & 5.14 \\
\rowcolor{MedCoreBlue} & h70\_m70 & 30.1M & 287.5G & 0.9524 & 0.9130 & 0.6342 & 6.23 \\
\rowcolor{MedCoreBlue} & h70\_m95 & 24.9M & 245.0G & 0.9522 & 0.9124 & 0.6259 & 6.78 \\
\midrule
\rowcolor{MedCoreGreen} MedCore Sequential & h60\_m70 & 15.6M & 132.6G & 0.9547 & \textbf{0.9176} & 0.6506 & 5.91 \\
\rowcolor{MedCoreGreen} & h72\_m84 & 13.7M & 112.4G & 0.9536 & 0.9153 & 0.6337 & 5.47 \\
\rowcolor{MedCoreGreen} & h84\_m95 & \textbf{12.1M} & \textbf{90.4G} & \textbf{0.9550} & 0.9174 & 0.6462 & \textbf{5.12} \\
\bottomrule
\end{tabular}
\end{table}

\begin{figure*}[t]
    \centering
    \includegraphics[width=\textwidth]{figures/viz.png}
    \caption{
    Qualitative segmentation comparison between our two high-pruning configurations and other models.
    Incorrect segmentation regions and boundary artifacts are highlighted with red boxes.
    }
    \label{fig:qualitative_results}
\end{figure*}

\subsection{Cross-Modality Generalization}

Table~\ref{tab:cross_modality} evaluates the same pruning pipeline on CVC-ClinicDB, BUSI, and ISIC2018. On ClinicDB and ISIC2018, MedCore improves boundary metrics after recovery. On BUSI, moderate compression preserves performance closely, while aggressive compression begins to degrade BF1 and HD95. This pattern is useful because it shows both the strength and the limit of direct cross-modality transfer. MedCore is not a replacement for target-domain validation. It is a compression framework that can preserve strong performance when the calibration and recovery data are representative.

\begin{table}[t]
\centering
\caption{Cross-modality results with post-pruning fine-tuning.}
\label{tab:cross_modality}
\small
\setlength{\tabcolsep}{4pt}
\begin{tabular}{llrrrrr}
\toprule
Dataset & Method & Params$\downarrow$ & FLOPs$\downarrow$ & Dice$\uparrow$ & BF1$\uparrow$ & HD95$\downarrow$ \\
\midrule
CVC-ClinicDB~\cite{BERNAL201599} & MedSAM & 89.7M & 926.5G & 0.9264 & 0.5477 & 13.77 \\
 & h50\_m70 & 35.8M & 377.0G & 0.9601 & 0.6603 & 2.64 \\
 & h70\_m70 & 30.1M & 283.4G & 0.9564 & 0.6409 & 3.85 \\
 & h70\_m95 & 24.9M & 240.9G & 0.9523 & 0.6194 & 6.30 \\
 & h80\_m95 & 23.2M & 224.6G & 0.9507 & 0.5951 & 4.46 \\
\midrule
BUSI~\cite{ALDHABYANI2020104863} & MedSAM & 89.7M & 926.5G & 0.9334 & 0.4989 & 7.30 \\
 & h50\_m70 & 35.8M & 377.0G & 0.9357 & 0.5048 & 7.35 \\
 & h70\_m70 & 30.1M & 283.4G & 0.9336 & 0.4913 & 7.82 \\
 & h70\_m95 & 24.9M & 240.9G & 0.9298 & 0.4703 & 9.43 \\
 & h80\_m95 & 23.2M & 224.6G & 0.9316 & 0.4833 & 8.58 \\
\midrule
ISIC2018~\cite{Tschandl_2018, codella2019skinlesionanalysismelanoma} & MedSAM & 89.7M & 926.5G & 0.9314 & 0.2401 & 7.47 \\
 & h50\_m70 & 35.8M & 377.0G & 0.9525 & 0.4806 & 6.12 \\
 & h70\_m70 & 30.1M & 283.4G & 0.9525 & 0.4732 & 5.57 \\
 & h70\_m95 & 24.9M & 240.9G & 0.9530 & 0.4753 & 5.86 \\
 & h80\_m95 & 23.2M & 224.6G & 0.9526 & 0.4708 & 5.94 \\
\bottomrule
\end{tabular}
\end{table}

\section{Analysis and Ablation}

\subsection{Component Ablation: What Drives Boundary Preservation?}
\label{sec:component_ablation}

Table~\ref{tab:component_ablation} ablates the main components. The one-time columns do not use post-pruning fine-tuning, so they isolate the structural effect of pruning decisions. The sequential column uses the same recovery protocol for all variants.

Boundary-aware Fisher is the most direct boundary-preserving component. Removing it hurts BF1, and the effect is strongest at the most aggressive one-time setting h70\_m95. The reset-to-SAM score has a different pattern. It has a small effect at moderate compression, but it becomes important when the pruning budget is severe. This supports our interpretation of reset scoring as an aggressive-compression safeguard. The distribution variance term follows a similar pattern. It protects structures that are unstable across datasets, and this becomes more useful when few structures remain.

\begin{table}[t]
\centering
\caption{Component ablation on macro-average BF1 across the three polyp benchmarks. One-time columns are reported without post-pruning fine-tuning. The sequential column uses the same post-pruning fine-tuning for all variants. $\Delta$ is the difference from the full method at the same compression level.}
\label{tab:component_ablation}
\small
\setlength{\tabcolsep}{3pt}
\begin{tabular}{lrrrrrrrrrr}
\toprule
 & \multicolumn{2}{c}{h40\_m30} & \multicolumn{2}{c}{h50\_m70} & \multicolumn{2}{c}{h70\_m70} & \multicolumn{2}{c}{h70\_m95} & \multicolumn{2}{c}{Seq. h84\_m95} \\
Method & BF1 & $\Delta$ & BF1 & $\Delta$ & BF1 & $\Delta$ & BF1 & $\Delta$ & BF1 & $\Delta$ \\
\midrule
Full MedCore & 0.5162 & -- & 0.4078 & -- & 0.3144 & -- & 0.4050 & -- & 0.6462 & -- \\
w/o Boundary Fisher & 0.5035 & -0.013 & 0.3724 & -0.035 & 0.3096 & -0.005 & 0.2781 & -0.127 & 0.6304 & -0.016 \\
w/o reset & 0.5149 & -0.001 & 0.3967 & -0.011 & 0.3628 & +0.048 & 0.3227 & -0.082 & 0.6279 & -0.018 \\
w/o variance & 0.5221 & +0.006 & 0.3997 & -0.008 & 0.3812 & +0.067 & 0.3249 & -0.080 & 0.6462 & 0.000 \\
\bottomrule
\end{tabular}
\end{table}

The ablation suggests a hierarchy. Boundary Fisher is the strongest component for direct contour protection. Reset-to-SAM and variance aggregation mainly act as safeguards under aggressive compression. This hierarchy is consistent with the design of MedCore. We use boundary Fisher to protect boundary-core structures, and we use reset-to-SAM to protect medical-core structures that were created by adaptation.

\subsection{Boundary Leverage Reveals a Head-Fragile Regime}
\label{sec:boundary_leverage_analysis}
We test the empirical prediction in Definition~\ref{def:boundary_leverage} using a $7{\times}7$ sweep over $h\in\{0.3,\dots,0.9\}$ and $m\in\{0.3,0.5,0.7,0.8,0.85,0.9,0.95\}$, with 384 calibration images (128 each from Kvasir, ClinicDB, ColonDB). For each adjacent head step $G_H(i,j){:}(h_i,m_j){\to}(h_{i+1},m_j)$ and MLP step $G_M(i,j){:}(h_i,m_j){\to}(h_i,m_{j+1})$, we compute the 95th-percentile version of Eq.~\eqref{eq:boundary_leverage} on the boundary band, skipping steps with zero parameter reduction.

Table~\ref{tab:boundary_leverage} shows that head-pruning steps have substantially larger boundary leverage: median $\lambda^{\mathrm{bd}}_{95}$ is 3.961 for heads versus 1.372 for MLPs (ratio 2.887), and head wins in all 15 valid paired comparisons. This supports the empirical law $\mathrm{Median}(\lambda_G^{\mathrm{bd}}\mid G\in\cH) > \mathrm{Median}(\lambda_G^{\mathrm{bd}}\mid G\in\cM)$ for MedSAM.

\begin{table}[t]
\centering
\caption{Step-level boundary leverage analysis on the head-MLP pruning sweep. The logit-level boundary leverage result agrees with metric-level BF1 and HD95 damage densities.}
\label{tab:boundary_leverage}
\small
\setlength{\tabcolsep}{5pt}
\begin{tabular}{lccc}
\toprule
Quantity & Head steps & MLP steps & Head/MLP ratio \\
\midrule
Median $\lambda^{\mathrm{bd}}_{95}$ & 3.961 & 1.372 & 2.887 \\
BF1 damage density & 0.007692 & 0.002951 & 2.607 \\
HD95 damage density & 0.025406 & 0.010445 & 2.432 \\
\midrule
Valid steps & 35 & 21 & -- \\
Paired win rate & \multicolumn{3}{c}{15/15 = 1.000} \\
\bottomrule
\end{tabular}
\end{table}

The BF1 and HD95 rows confirm at the final-mask level that logit-level boundary leverage translates to actual metric damage: head pruning incurs $2.4$--$2.6\times$ greater BF1 and HD95 degradation per one percent parameter reduction. We do not claim heads affect \emph{only} boundary pixels—their full-image perturbation ratio is also high. Our narrower claim is that MedSAM heads are high-boundary-leverage components, making aggressive head pruning risky for boundary-sensitive segmentation.

\subsection{Head-MLP Compression Landscape}
The boundary leverage result above explains a non-trivial compression landscape: equal parameter reductions can produce markedly different BF1 depending on the head/MLP allocation. Near 60\% reduction, h30\_m80 yields BF1 0.376 while h50\_m70 yields 0.408 (no fine-tuning)—parameter count alone does not determine boundary quality. Figure~\ref{fig:mlp_landscape} shows one slice at fixed $h=0.30$: BF1 first drops as $m$ increases, then recovers once the effective parameter count saturates at the min-retention plateau. This non-monotonic behavior confirms that compression is not a scalar sparsity problem—both block allocation and structure type matter.


\begin{figure}[t]
\centering
\includegraphics[width=0.95\linewidth]{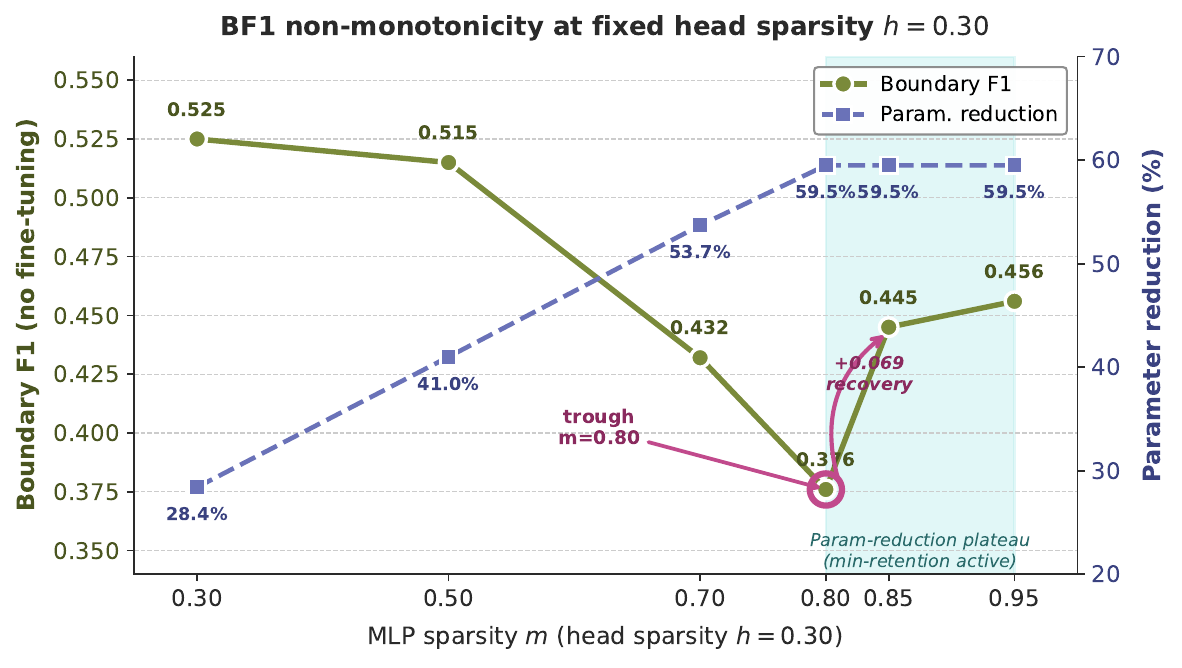}
\caption{BF1 non-monotonicity at fixed head sparsity $h=0.30$ (no fine-tuning). 
BF1 drops to a trough of 0.376 at $m=0.80$, then recovers to 0.456 at 
$m=0.95$ along the 59.5\% parameter-reduction plateau where the 
min-retention constraint is active.}
\label{fig:mlp_landscape}
\end{figure}

This result does not mean that extreme sparsity is always better. It means that boundary-aware allocation can matter as much as the nominal sparsity level. Together with Table~\ref{tab:boundary_leverage}, it supports a simple practical rule: head pruning should be conservative, and most of the compression should come from lower-leverage MLP structures unless a calibration analysis shows otherwise.

\section{Conclusion}

We introduced MedCore, a structured pruning framework for MedSAM. MedCore treats pruning as medical-core extraction. It preserves structures that are important after SAM-to-MedSAM adaptation through a reset-to-SAM intervention, and it preserves contour-critical structures through boundary-aware Fisher estimation. We also introduced boundary leverage as a geometric view of compression-induced boundary damage. This view explains why a compressed MedSAM can keep high Dice while losing boundary fidelity, and it explains why attention heads require conservative pruning in our setting. Experiments show that MedCore can greatly reduce parameter count and FLOPs while keeping strong region and boundary performance across medical segmentation datasets. We note that MedCore inherits MedSAM's underlying capabilities and biases; clinical deployment therefore requires task-specific validation and prospective safety evaluation.



\bibliographystyle{unsrtnat}
\bibliography{references}

@article{kirillov2023segment,
  title={Segment Anything},
  author={Kirillov, Alexander and Mintun, Eric and Ravi, Nikhila and Mao, Hanzi and Rolland, Chloe and Gustafson, Laura and Xiao, Tete and Whitehead, Spencer and Berg, Alexander C. and Lo, Wan-Yen and Doll{\'a}r, Piotr and Girshick, Ross},
  journal={arXiv preprint arXiv:2304.02643},
  year={2023}
}

@article{ma2024segment,
  title={Segment Anything in Medical Images},
  author={Ma, Jun and He, Yuting and Li, Feifei and Han, Lin and You, Chenyu and Wang, Bo},
  journal={Nature Communications},
  volume={15},
  pages={654},
  year={2024}
}

@inproceedings{dosovitskiy2021vit,
  title={An Image is Worth 16x16 Words: Transformers for Image Recognition at Scale},
  author={Dosovitskiy, Alexey and Beyer, Lucas and Kolesnikov, Alexander and Weissenborn, Dirk and Zhai, Xiaohua and Unterthiner, Thomas and Dehghani, Mostafa and Minderer, Matthias and Heigold, Georg and Gelly, Sylvain and Uszkoreit, Jakob and Houlsby, Neil},
  booktitle={International Conference on Learning Representations},
  year={2021}
}

@misc{zhang2023samed,
  title={Customized Segment Anything Model for Medical Image Segmentation},
  author={Zhang, Kaidong and Liu, Dong},
  year={2023},
  eprint={2304.13785},
  archivePrefix={arXiv},
  primaryClass={cs.CV}
}

@misc{cheng2023sammed2d,
  title={SAM-Med2D},
  author={Cheng, Jun and Ye, Jieneng and Deng, Zhuoyang and Chen, Jian and Li, Tianbin and Wang, Haoyu and Su, Yan and Huang, Ziyan and Chen, Jiao and Jiang, Lei and Sun, Hui and He, Jingwei and Zhang, Shaoting and Zhu, Min},
  year={2023},
  eprint={2308.16184},
  archivePrefix={arXiv},
  primaryClass={cs.CV}
}

@misc{wu2023medicalsamadapter,
  title={Medical SAM Adapter: Adapting Segment Anything Model for Medical Image Segmentation},
  author={Wu, Junde and Ji, Wei and Liu, Yueming and Fu, Huazhu and Xu, Min and Xu, Yanwu and Jin, Yanwu},
  year={2023},
  eprint={2304.12620},
  archivePrefix={arXiv},
  primaryClass={cs.CV}
}

@misc{zhao2023fastsam,
  title={Fast Segment Anything},
  author={Zhao, Xu and Ding, Wenchao and An, Yongqi and Du, Yinglong and Yu, Tao and Li, Min and Tang, Ming and Wang, Jinqiao},
  year={2023},
  eprint={2306.12156},
  archivePrefix={arXiv},
  primaryClass={cs.CV}
}

@misc{xiong2023efficientsam,
  title={EfficientSAM: Leveraged Masked Image Pretraining for Efficient Segment Anything},
  author={Xiong, Yunyang and Varadarajan, Bala and Wu, Lemeng and Xiang, Xiaoyu and Xiao, Fanyi and Zhu, Chenchen and Dai, Xiaoliang and Wang, Dilin and Sun, Fei and Iandola, Forrest and Krishnamoorthi, Raghuraman and Chandra, Vikas},
  year={2023},
  eprint={2312.00863},
  archivePrefix={arXiv},
  primaryClass={cs.CV}
}

@article{zhou2025edgesam,
  title={EdgeSAM: Prompt-In-the-Loop Distillation for SAM},
  author={Zhou, Chong and Li, Xiangtai and Loy, Chen Change and Dai, Bo},
  journal={International Journal of Computer Vision},
  volume={133},
  number={12},
  pages={8452--8468},
  year={2025}
}

@misc{chen2024slimsam,
  title={SlimSAM: 0.1\% Data Makes Segment Anything Slim},
  author={Chen, Zigeng and Fang, Gongfan and Ma, Xinyin and Wang, Xinchao},
  year={2024},
  eprint={2312.05284},
  archivePrefix={arXiv},
  primaryClass={cs.CV}
}

@inproceedings{ronneberger2015unet,
  title={U-Net: Convolutional Networks for Biomedical Image Segmentation},
  author={Ronneberger, Olaf and Fischer, Philipp and Brox, Thomas},
  booktitle={Medical Image Computing and Computer-Assisted Intervention},
  pages={234--241},
  year={2015}
}

@article{isensee2021nnunet,
  title={nnU-Net: A Self-Configuring Method for Deep Learning-Based Biomedical Image Segmentation},
  author={Isensee, Fabian and Jaeger, Paul F. and Kohl, Simon A. A. and Petersen, Jens and Maier-Hein, Klaus H.},
  journal={Nature Methods},
  volume={18},
  pages={203--211},
  year={2021}
}

@inproceedings{swinunet,
  title={Swin-Unet: Unet-Like Pure Transformer for Medical Image Segmentation},
  author={Cao, Hu and Wang, Yueyue and Chen, Joy and Jiang, Dongsheng and Zhang, Xiaopeng and Tian, Qi and Wang, Manning},
  booktitle={Proceedings of the European Conference on Computer Vision Workshops},
  year={2022}
}

@inproceedings{rahman2024emcad,
  title={EMCAD: Efficient Multi-Scale Convolutional Attention Decoding for Medical Image Segmentation},
  author={Rahman, Md Mostafijur and Munir, Mustafa and Marculescu, Radu},
  booktitle={Proceedings of the IEEE/CVF Conference on Computer Vision and Pattern Recognition},
  pages={11769--11779},
  year={2024}
}

@misc{rahman2025mkunet,
  title={MK-UNet: Multi-Kernel Lightweight CNN for Medical Image Segmentation},
  author={Rahman, Md Mostafijur and Marculescu, Radu},
  year={2025},
  eprint={2509.18493},
  archivePrefix={arXiv},
  primaryClass={cs.CV}
}

@inproceedings{lu2025qmedsam,
  title={Efficient Quantization-Aware Training on Segment Anything Model in Medical Images and Its Deployment},
  author={Lu, Haisheng and Fu, Yujie and Zhang, Fan and Zhang, Le},
  booktitle={Medical Image Segmentation Foundation Models. CVPR 2024 Challenge: Segment Anything in Medical Images on Laptop},
  pages={137--150},
  year={2025},
  publisher={Springer Nature Switzerland}
}

@inproceedings{lecun1989obd,
  title={Optimal Brain Damage},
  author={LeCun, Yann and Denker, John S. and Solla, Sara A.},
  booktitle={Advances in Neural Information Processing Systems},
  volume={2},
  year={1989}
}

@inproceedings{hassibi1993obs,
  title={Second Order Derivatives for Network Pruning: Optimal Brain Surgeon},
  author={Hassibi, Babak and Stork, David G.},
  booktitle={Advances in Neural Information Processing Systems},
  volume={5},
  year={1993}
}

@inproceedings{han2015deepcompression,
  title={Deep Compression: Compressing Deep Neural Networks with Pruning, Trained Quantization and Huffman Coding},
  author={Han, Song and Mao, Huizi and Dally, William J.},
  booktitle={International Conference on Learning Representations},
  year={2016}
}

@inproceedings{sanh2020movement,
  title={Movement Pruning: Adaptive Sparsity by Fine-Tuning},
  author={Sanh, Victor and Wolf, Thomas and Rush, Alexander M.},
  booktitle={Advances in Neural Information Processing Systems},
  volume={33},
  pages={20378--20389},
  year={2020}
}

@inproceedings{michel2019sixteen,
  title={Are Sixteen Heads Really Better than One?},
  author={Michel, Paul and Levy, Omer and Neubig, Graham},
  booktitle={Advances in Neural Information Processing Systems},
  volume={32},
  year={2019}
}

@inproceedings{voita2019analyzing,
  title={Analyzing Multi-Head Self-Attention: Specialized Heads Do the Heavy Lifting, the Rest Can Be Pruned},
  author={Voita, Elena and Talbot, David and Moiseev, Fedor and Sennrich, Rico and Titov, Ivan},
  booktitle={Annual Meeting of the Association for Computational Linguistics},
  pages={5797--5808},
  year={2019}
}

@inproceedings{liu2021groupfisher,
  title={Group Fisher Pruning for Practical Network Compression},
  author={Liu, Liyang and Zhang, Shilong and Kuang, Zhanghui and Zhou, Aojun and Xue, Jing-Hao and Wang, Xinjiang and Chen, Yimin and Yang, Wenming and Liao, Qingmin and Zhang, Wayne},
  booktitle={International Conference on Machine Learning},
  pages={7021--7032},
  year={2021}
}

@inproceedings{yang2023nvit,
  title={Global Vision Transformer Pruning with Hessian-Aware Saliency},
  author={Yang, Huanrui and Yin, Hongxu and Molchanov, Pavlo and Li, Hai and Kautz, Jan},
  booktitle={Proceedings of the IEEE/CVF Conference on Computer Vision and Pattern Recognition},
  pages={18547--18557},
  year={2023}
}

@inproceedings{kervadec2019boundary,
  title={Boundary Loss for Highly Unbalanced Segmentation},
  author={Kervadec, Hoel and Bouchtiba, Jalal and Desrosiers, Christian and Granger, Eric and Dolz, Jose and Ayed, Ismail Ben},
  booktitle={Medical Imaging with Deep Learning},
  pages={285--296},
  year={2019}
}

@article{karimi2019hausdorff,
  title={Reducing the Hausdorff Distance in Medical Image Segmentation with Convolutional Neural Networks},
  author={Karimi, Davood and Salcudean, Septimiu E.},
  journal={IEEE Transactions on Medical Imaging},
  volume={39},
  number={2},
  pages={499--513},
  year={2020}
}

@inproceedings{shit2021cldice,
  title={clDice: A Novel Topology-Preserving Loss Function for Tubular Structure Segmentation},
  author={Shit, Suprosanna and Paetzold, Johannes C. and Sekuboyina, Anjany and Ezhov, Ivan and Unger, Andreas and Zhylka, Andrey and Pluim, Josien P. W. and Bauer, Ulrich and Menze, Bjoern H.},
  booktitle={Proceedings of the IEEE/CVF Conference on Computer Vision and Pattern Recognition},
  pages={16560--16569},
  year={2021}
}

@inproceedings{li2023polypsam,
  title={Polyp-{SAM}: Transfer {SAM} for Polyp Segmentation},
  author={Li, Yuheng and Hu, Mingzhe and Yang, Xiaofeng},
  booktitle={Medical Imaging 2024: Image Processing},
  volume={12927},
  pages={1292735},
  year={2024},
  organization={SPIE},
  doi={10.1117/12.3006809},
  url={https://arxiv.org/abs/2305.00293}
}

@inproceedings{wei2024imedsam,
  title={{I-MedSAM}: Implicit Medical Image Segmentation with Segment Anything},
  author={Wei, Xiaobao and Cao, Jiajun and Jin, Yizhu and Lu, Ming and Wang, Guangyu and Zhang, Shanghang},
  booktitle={European Conference on Computer Vision},
  pages={90--107},
  year={2024},
  organization={Springer},
  doi={10.1007/978-3-031-72684-2_6},
  url={https://arxiv.org/abs/2311.17081}
}

@article{zhao2026ldfsam,
  title={{LDFSAM}: Localization Distillation-Enhanced Feature Prompting {SAM} for Medical Image Segmentation},
  author={Zhao, Xuanbo and Wang, Cheng and Xu, Huaxing and Zhou, Hong and Yu, Zekuan and Chen, Tao and Wei, Xiaoling and Zhang, Rongjun},
  journal={Journal of Imaging},
  volume={12},
  number={2},
  pages={74},
  year={2026},
  doi={10.3390/jimaging12020074},
  url={https://www.mdpi.com/2313-433X/12/2/74}
}

@misc{ma2025medsam2,
  title={{MedSAM2}: Segment Anything in 3D Medical Images and Videos},
  author={Ma, Jun and Yang, Zongxin and Kim, Sumin and Chen, Bihui and Baharoon, Mohammed and Fallahpour, Adibvafa and Asakereh, Reza and Lyu, Hongwei and Wang, Bo},
  year={2025},
  eprint={2504.03600},
  archivePrefix={arXiv},
  primaryClass={eess.IV},
  url={https://arxiv.org/abs/2504.03600}
}

@article{zhong2025medsam2featurefusion,
  title={{MedSAM}/{MedSAM2} Feature Fusion: Enhancing {nnUNet} for 2D {TOF-MRA} Brain Vessel Segmentation},
  author={Zhong, Han and Zhang, Jiatian and Zhao, Lingxiao},
  journal={Journal of Imaging},
  volume={11},
  number={6},
  pages={202},
  year={2025},
  doi={10.3390/jimaging11060202},
  url={https://www.mdpi.com/2313-433X/11/6/202}
}

@article{mu2026sam2polypnet,
  title={{SAM2-PolypNet}: {SAM2} with Adaptive Context Enhancement Model for Polyp Segmentation},
  author={Mu, Zhaoting and Ning, Yu and Zhao, Yueqi and Zhang, Rong and Mubtasim, Fuad Md and Ning, Hailong},
  journal={Biomedical Signal Processing and Control},
  year={2026},
  note={In press},
  url={https://www.sciencedirect.com/science/article/pii/S1746809426009353}
}

@article{feng2025hssamnet,
  title={{HSSAM-Net}: Hyper-Scale Shifted Aggregation Network for Precise Colorectal Polyp Segmentation in Endoscopic Images},
  author={Feng, Qing and Ahmed, Shahzad and Zhang, Yueming and He, Lan and Yaqub, Muhammad},
  journal={Scientific Reports},
  volume={15},
  pages={38146},
  year={2025},
  doi={10.1038/s41598-025-21954-y},
  url={https://www.nature.com/articles/s41598-025-21954-y}
}

@inproceedings{shao2024polyper,
  title={Polyper: Boundary Sensitive Polyp Segmentation},
  author={Shao, Hao and Zhang, Yang and Hou, Qibin},
  booktitle={Proceedings of the AAAI Conference on Artificial Intelligence},
  volume={38},
  pages={4731--4739},
  year={2024},
  doi={10.1609/aaai.v38i5.28274},
  url={https://ojs.aaai.org/index.php/AAAI/article/view/28274}
}

@article{li2024bmfanet,
  title={{BMFA-Net}: Boundary Constraint Multi-Level Feature Aggregation Framework for Precise Polyp Segmentation},
  author={Li, Qin and Zhang, Tianchi and Mosharaf, Parvej Md and Zhang, Jing},
  journal={Applied Sciences},
  volume={14},
  number={10},
  pages={4063},
  year={2024},
  doi={10.3390/app14104063},
  url={https://www.mdpi.com/2076-3417/14/10/4063}
}

@article{liu2024fcanet,
  title={{FCA-Net}: Fully Context-Aware Feature Aggregation Network for Medical Segmentation},
  author={Liu, Dingzhou and Deng, Hongmin and Huang, Zhengwei and Fu, Jinghao},
  journal={Biomedical Signal Processing and Control},
  volume={91},
  pages={106004},
  year={2024},
  doi={10.1016/j.bspc.2024.106004},
  url={https://www.sciencedirect.com/science/article/pii/S1746809424000624}
}

@misc{tong2026begaunet,
  title={{BEGA-UNet}: Boundary-Explicit Guided Attention {U-Net} with Multi-Scale Feature Aggregation for Colonoscopic Polyp Segmentation},
  author={Tong, Tao and Zhang, Wen and Zu, Wanni},
  year={2026},
  howpublished={medRxiv preprint},
  doi={10.64898/2026.03.04.26347608},
  url={https://www.medrxiv.org/content/10.64898/2026.03.04.26347608v2}
}

@misc{dosovitskiy2021imageworth16x16words,
      title={An Image is Worth 16x16 Words: Transformers for Image Recognition at Scale}, 
      author={Alexey Dosovitskiy and Lucas Beyer and Alexander Kolesnikov and Dirk Weissenborn and Xiaohua Zhai and Thomas Unterthiner and Mostafa Dehghani and Matthias Minderer and Georg Heigold and Sylvain Gelly and Jakob Uszkoreit and Neil Houlsby},
      year={2021},
      eprint={2010.11929},
      archivePrefix={arXiv},
      primaryClass={cs.CV},
      url={https://arxiv.org/abs/2010.11929}, 
}

@article{BERNAL201599,
title = {WM-DOVA maps for accurate polyp highlighting in colonoscopy: Validation vs. saliency maps from physicians},
journal = {Computerized Medical Imaging and Graphics},
volume = {43},
pages = {99-111},
year = {2015},
issn = {0895-6111},
doi = {https://doi.org/10.1016/j.compmedimag.2015.02.007},
url = {https://www.sciencedirect.com/science/article/pii/S0895611115000567},
author = {Jorge Bernal and F. Javier Sánchez and Gloria Fernández-Esparrach and Debora Gil and Cristina Rodríguez and Fernando Vilariño}
}

@InProceedings{10.1007/978-3-030-37734-2_37,
author="Jha, Debesh
and Smedsrud, Pia H.
and Riegler, Michael A.
and Halvorsen, P{\aa}l
and de Lange, Thomas
and Johansen, Dag
and Johansen, H{\aa}vard D.",
editor="Ro, Yong Man
and Cheng, Wen-Huang
and Kim, Junmo
and Chu, Wei-Ta
and Cui, Peng
and Choi, Jung-Woo
and Hu, Min-Chun
and De Neve, Wesley",
title="Kvasir-SEG: A Segmented Polyp Dataset",
booktitle="MultiMedia Modeling",
year="2020",
publisher="Springer International Publishing",
address="Cham",
pages="451--462",
isbn="978-3-030-37734-2"
}

@article{https://doi.org/10.1155/2017/4037190,
author = {Vázquez, David and Bernal, Jorge and Sánchez, F. Javier and Fernández-Esparrach, Gloria and López, Antonio M. and Romero, Adriana and Drozdzal, Michal and Courville, Aaron},
title = {A Benchmark for Endoluminal Scene Segmentation of Colonoscopy Images},
journal = {Journal of Healthcare Engineering},
volume = {2017},
number = {1},
pages = {4037190},
doi = {https://doi.org/10.1155/2017/4037190},
url = {https://onlinelibrary.wiley.com/doi/abs/10.1155/2017/4037190},
eprint = {https://onlinelibrary.wiley.com/doi/pdf/10.1155/2017/4037190},
year = {2017}
}

@article{ALDHABYANI2020104863,
title = {Dataset of breast ultrasound images},
journal = {Data in Brief},
volume = {28},
pages = {104863},
year = {2020},
issn = {2352-3409},
doi = {https://doi.org/10.1016/j.dib.2019.104863},
url = {https://www.sciencedirect.com/science/article/pii/S2352340919312181},
author = {Walid Al-Dhabyani and Mohammed Gomaa and Hussien Khaled and Aly Fahmy}
}

@article{Tschandl_2018,
   title={The HAM10000 dataset, a large collection of multi-source dermatoscopic images of common pigmented skin lesions},
   volume={5},
   ISSN={2052-4463},
   url={http://dx.doi.org/10.1038/sdata.2018.161},
   DOI={10.1038/sdata.2018.161},
   number={1},
   journal={Scientific Data},
   publisher={Springer Science and Business Media LLC},
   author={Tschandl, Philipp and Rosendahl, Cliff and Kittler, Harald},
   year={2018},
   month=Aug }

@misc{codella2019skinlesionanalysismelanoma,
      title={Skin Lesion Analysis Toward Melanoma Detection 2018: A Challenge Hosted by the International Skin Imaging Collaboration (ISIC)}, 
      author={Noel Codella and Veronica Rotemberg and Philipp Tschandl and M. Emre Celebi and Stephen Dusza and David Gutman and Brian Helba and Aadi Kalloo and Konstantinos Liopyris and Michael Marchetti and Harald Kittler and Allan Halpern},
      year={2019},
      eprint={1902.03368},
      archivePrefix={arXiv},
      primaryClass={cs.CV},
      url={https://arxiv.org/abs/1902.03368}, 
}

@misc{li2022revisitingrandomchannelpruning,
      title={Revisiting Random Channel Pruning for Neural Network Compression}, 
      author={Yawei Li and Kamil Adamczewski and Wen Li and Shuhang Gu and Radu Timofte and Luc Van Gool},
      year={2022},
      eprint={2205.05676},
      archivePrefix={arXiv},
      primaryClass={cs.CV},
      url={https://arxiv.org/abs/2205.05676}, 
}

@misc{han2015learningweightsconnectionsefficient,
      title={Learning both Weights and Connections for Efficient Neural Networks}, 
      author={Song Han and Jeff Pool and John Tran and William J. Dally},
      year={2015},
      eprint={1506.02626},
      archivePrefix={arXiv},
      primaryClass={cs.NE},
      url={https://arxiv.org/abs/1506.02626}, 
}

@misc{michel2019sixteenheadsreallybetter,
      title={Are Sixteen Heads Really Better than One?}, 
      author={Paul Michel and Omer Levy and Graham Neubig},
      year={2019},
      eprint={1905.10650},
      archivePrefix={arXiv},
      primaryClass={cs.CL},
      url={https://arxiv.org/abs/1905.10650}, 
}

@misc{theis2018fastergazepredictiondense,
      title={Faster gaze prediction with dense networks and Fisher pruning}, 
      author={Lucas Theis and Iryna Korshunova and Alykhan Tejani and Ferenc Huszár},
      year={2018},
      eprint={1801.05787},
      archivePrefix={arXiv},
      primaryClass={cs.CV},
      url={https://arxiv.org/abs/1801.05787}, 
}

\appendix
\section*{Appendix}
\addcontentsline{toc}{section}{Appendix}
The appendix contains proofs, implementation details, and additional information about the boundary leverage analysis. We keep the main text focused on the method and key empirical findings.

\section{Proofs}
\label{app:proofs}

\subsection{Proof of Theorem~\ref{thm:boundary_perturbation}}

\begin{proof}
Fix a point $\vx\in\Gamma_{\vtheta}$. By definition of the old boundary, $s_{\vtheta}(\vx)=0$. Since Assumption~\ref{ass:regular_boundary} gives $\|\nabla s_{\vtheta}(\vx)\|_2>0$, the unit normal direction of the level set at $\vx$ is
\[
\vn(\vx)=\frac{\nabla s_{\vtheta}(\vx)}{\|\nabla s_{\vtheta}(\vx)\|_2}.
\]
The new logit map after compression is $\widetilde s(\vx)=s_{\vtheta}(\vx)+\delta_G(\vx)$. We seek the new boundary point on the normal line through $\vx$. We write it as
\[
\widetilde \vx=\vx+u_G(\vx)\vn(\vx),
\]
where $u_G(\vx)$ is the signed normal displacement. Because $\widetilde \vx$ lies on the new boundary, it satisfies
\[
0=\widetilde s(\widetilde \vx)=s_{\vtheta}(\widetilde \vx)+\delta_G(\widetilde \vx).
\]
We expand both terms around $\vx$. The first-order Taylor expansion gives
\[
 s_{\vtheta}(\widetilde \vx)
 =s_{\vtheta}(\vx)
 +u_G(\vx)\nabla s_{\vtheta}(\vx)^{\top}\vn(\vx)
 +O(u_G(\vx)^2).
\]
Since $s_{\vtheta}(\vx)=0$ and $\nabla s_{\vtheta}(\vx)^{\top}\vn(\vx)=\|\nabla s_{\vtheta}(\vx)\|_2$, this becomes
\[
 s_{\vtheta}(\widetilde \vx)
 =u_G(\vx)\|\nabla s_{\vtheta}(\vx)\|_2
 +O(u_G(\vx)^2).
\]
The perturbation term satisfies
\[
\delta_G(\widetilde \vx)
=\delta_G(\vx)+O(|u_G(\vx)|\|\delta_G\|_{C^1}).
\]
Substituting the two expansions into the boundary equation gives
\[
0=
 u_G(\vx)\|\nabla s_{\vtheta}(\vx)\|_2
 +\delta_G(\vx)
 +O(u_G(\vx)^2)
 +O(|u_G(\vx)|\|\delta_G\|_{C^1}).
\]
For small perturbations, the implicit function theorem guarantees that $u_G(\vx)$ is also small and has the same order as $\delta_G(\vx)$. Therefore the two remainder terms are second order in $\|\delta_G\|_{C^1}$. Solving the first-order part gives
\[
 u_G(\vx)
 =-
 \frac{\delta_G(\vx)}{\|\nabla s_{\vtheta}(\vx)\|_2}
 +O(\|\delta_G\|_{C^1}^2).
\]
This is Eq.~\eqref{eq:boundary_displacement}.
\end{proof}

\subsection{Proof of Proposition~\ref{prop:budget_rule}}

\begin{proof}
Let the current allocation be $(c_H,c_M)$ and keep the total compression fixed at $C=c_H+c_M$. We move a small amount $\eta>0$ of compression budget from heads to MLPs. The new allocation is
\[
(c_H-\eta,c_M+\eta).
\]
The first-order Taylor expansion of $\cE_{\mathrm{bd}}$ at $(c_H,c_M)$ gives
\[
\cE_{\mathrm{bd}}(c_H-\eta,c_M+\eta)
=
\cE_{\mathrm{bd}}(c_H,c_M)
-
\eta\frac{\partial \cE_{\mathrm{bd}}}{\partial c_H}
+
\eta\frac{\partial \cE_{\mathrm{bd}}}{\partial c_M}
+o(\eta).
\]
The first-order change in boundary error is therefore
\[
\Delta \cE_{\mathrm{bd}}
=
\eta
\left(
\frac{\partial \cE_{\mathrm{bd}}}{\partial c_M}
-
\frac{\partial \cE_{\mathrm{bd}}}{\partial c_H}
\right)
+o(\eta).
\]
If $\frac{\partial \cE_{\mathrm{bd}}}{\partial c_H}$ is larger than $\frac{\partial \cE_{\mathrm{bd}}}{\partial c_M}$, then the coefficient of $\eta$ is negative. For sufficiently small positive $\eta$, the first-order term dominates the $o(\eta)$ remainder, so $\Delta \cE_{\mathrm{bd}}<0$. The boundary error decreases while the total compression remains $C$.
\end{proof}

\section{Algorithmic Summary}
\label{app:algorithm}

Algorithm~\ref{alg:medcore} summarizes MedCore. The algorithm separates the scoring stage from the recovery stage. The reset-to-SAM operation is used only to compute importance. The final compressed model removes groups or keeps them active. It does not reset kept groups to SAM weights.

\begin{algorithm}[h]
\caption{MedCore pruning}
\label{alg:medcore}
\begin{algorithmic}[1]
\Require MedSAM checkpoint $\vtheta^M$, SAM checkpoint $\vtheta^S$, frozen prompt encoder $\vphi$, calibration sets $\{\cD_{\mathrm{cal}}^{(r)}\}_{r=1}^{R}$, structured groups $\cG$, target budget $B$
\Ensure Pruned and recovered MedSAM checkpoint
\State Initialize all group masks as active.
\For{each sub-distribution $r$}
    \State Compute boundary-aware MedSAM Fisher $F^M_{r,i}$ using Eq.~\eqref{eq:medsam_fisher}.
\EndFor
\State Compute SAM Fisher $F^S_i$ on the same calibration data.
\For{each group $g\in\cG$}
    \State Estimate $\widehat\Delta_g^{\mathrm{zero},(r)}$ using $F^M_{r,i}$.
    \State Estimate $\widehat\Delta_g^{\mathrm{reset},(r)}$ using Eq.~\eqref{eq:reset_approx}.
    \State Compute the fused score $\widehat Q_g^{(r)}$ and the distribution-aware score $\widehat Q_g^{\mathrm{dist}}$.
    \State Compute the cost-normalized priority $P_g$ using Eq.~\eqref{eq:priority}.
\EndFor
\State Allocate head-pruning budgets across blocks with block sensitivity.
\State Prune attention heads with the lowest priorities under the block budgets.
\State Run a short recovery fine-tuning stage.
\State Recompute or reuse priorities for MLP connection groups in the head-pruned model.
\State Prune MLP connection groups until the target budget is reached.
\State Run post-pruning recovery fine-tuning with Eq.~\eqref{eq:recovery_loss}.
\State Physically remove pruned structures and return the compact checkpoint.
\end{algorithmic}
\end{algorithm}

\section{Additional Boundary Leverage Details}
\label{app:boundary_leverage_details}

We compute boundary leverage on 384 calibration images, with 128 images from each polyp dataset. The sweep has seven head sparsity levels and seven MLP sparsity levels. We compute adjacent head steps and adjacent MLP steps. Steps with zero additional parameter reduction are undefined and are omitted.

For a head step, we fix the MLP sparsity and increase head sparsity. For an MLP step, we fix head sparsity and increase MLP sparsity. We use the same parameter-reduction unit for both families. This normalization is important because a 10\% increase in head sparsity and a 10\% increase in MLP sparsity do not remove the same number of parameters.

The boundary band width is tested at $\tau=3$ and $\tau=5$ pixels. Both choices give the same qualitative conclusion in our experiments. The main text reports the 95th-percentile boundary leverage because HD95 is also a high-percentile boundary metric. The mean boundary leverage gives the same direction but is less aligned with HD95.

The summary values are: BLR$_{95}=2.887$, RLR $=3.064$, BSR $=0.942$, paired win rate $=1.000$, and median paired difference $=2.603$. BSR is the ratio between boundary leverage ratio and region leverage ratio. Since BSR is slightly below one, we do not claim that head pruning is boundary-specific. We claim that head pruning is high-boundary-leverage. This is the statement needed to explain boundary fragility.

\section{Implementation Details}
\label{app:implementation_details}

We use the official MedSAM ViT-B checkpoint as $\vtheta^M$ and the SAM ViT-B checkpoint as $\vtheta^S$. The prompt encoder remains frozen. Fisher estimation uses calibration images and the same bounding-box prompt protocol as MedSAM. We compute boundary maps by morphological dilation minus erosion of the binary mask. We normalize losses so that the boundary-weighted BCE term does not dominate the Dice term at the beginning of recovery.

For attention-head pruning, each group contains the corresponding Q, K, V, and output-projection parameters associated with one head. For MLP pruning, each group contains a connection group in the feed-forward block. We use physical removal after mask selection so that parameter count and FLOPs reflect the actual compact model.

The main one-time configurations protect the deepest encoder blocks because they have high Fisher sensitivity and connect directly to the mask decoder. The sequential extreme setting is different. It starts from the h70\_m95 checkpoint and then performs a conservative second pruning pass on the previously protected deepest blocks. This setting is used only to test the limit of compression.

%

\section{Comparison with Structured Pruning Baselines}
\label{sec:appendix_pruning_baselines}

The main paper compares MedCore against external efficient SAM variants and medical segmentation backbones (Table~1), where each method has a different architecture and training pipeline. This appendix provides a controlled comparison that isolates the contribution of the \emph{scoring criterion} itself. All baselines below operate on the same MedSAM ViT-B encoder, share identical structured groups (attention heads and MLP connection groups), and use the same target sparsity levels as MedCore. To remove the confound of recovery quality, \textbf{post-pruning fine-tuning is disabled for all methods in this comparison}, including MedCore. The reported numbers therefore reflect the direct effect of each importance score on the compressed model, before any recovery is applied.

\paragraph{Baselines.} We compare against five structured pruning criteria:
(i) \textbf{Random Pruning}~\cite{li2022revisitingrandomchannelpruning}, which selects groups uniformly at random within each block; (ii) \textbf{Magnitude Pruning}~\cite{han2015learningweightsconnectionsefficient,han2015deepcompression}, which ranks groups by the $\ell_2$ norm of their parameters; (iii) \textbf{Zero-only Fisher}~\cite{michel2019sixteenheadsreallybetter}, which uses only the zero-intervention score $\Delta^{\mathrm{zero}}$ from Eq.~(5) and discards the reset-to-SAM signal; (iv) \textbf{Vanilla Fisher}~\cite{theis2018fastergazepredictiondense}, which estimates Fisher information using the standard Dice+BCE loss without boundary weighting, Cross-Fisher, or distribution-aware aggregation; and (v) \textbf{Original Medical Core}, an internal earlier version of our method that retains the dual zero/reset intervention but drops the boundary-aware Fisher, Cross-Fisher, multi-dataset aggregation, and block-sensitivity allocation.

\paragraph{Results.} Table~\ref{tab:pruning_baselines_app} reports macro-averaged metrics on the three polyp benchmarks across four compression levels.

\begin{table}[!h]
\centering
\caption{Comparison with structured pruning baselines on macro-averaged polyp segmentation metrics. All methods operate on MedSAM ViT-B without post-pruning fine-tuning. Parameter and FLOPs reductions are relative to the MedSAM ViT-B baseline (89.7M parameters, 926.5G FLOPs). Best results in each compression level are in bold.}
\label{tab:pruning_baselines_app}
\small
\setlength{\tabcolsep}{4pt}
\renewcommand{\arraystretch}{1.05}
\begin{tabular}{llcccccc}
\toprule
Compression & Method & Param$\downarrow$ & FLOPs$\downarrow$ & Dice$\uparrow$ & IoU$\uparrow$ & BF1$\uparrow$ & HD95$\downarrow$ \\
\midrule
\multirow{6}{*}{Low: h40\_m30}
& Random Pruning~\cite{li2022revisitingrandomchannelpruning}     & 31.7\% & 34.3\% & 0.9076 & 0.8436 & 0.4598 & 23.12 \\
& Magnitude Pruning~\cite{han2015learningweightsconnectionsefficient,han2015deepcompression}  & 31.7\% & 34.3\% & 0.8278 & 0.7228 & 0.2081 & 41.63 \\
& Zero-only Fisher~\cite{michel2019sixteenheadsreallybetter}     & 31.7\% & 34.3\% & 0.9137 & 0.8541 & 0.5035 & 22.13 \\
& Vanilla Fisher~\cite{theis2018fastergazepredictiondense}       & 31.7\% & 34.3\% & 0.9113 & 0.8512 & 0.4972 & 23.55 \\
& Original Medical Core                                          & 31.7\% & 34.3\% & 0.9123 & 0.8519 & 0.4988 & 23.50 \\
& \textbf{MedCore (ours)}                                        & 31.7\% & 33.0\% & \textbf{0.9191} & \textbf{0.8615} & \textbf{0.5162} & \textbf{20.30} \\
\midrule
\multirow{6}{*}{Mid: h50\_m70}
& Random Pruning~\cite{li2022revisitingrandomchannelpruning}     & 60.0\% & 58.4\% & 0.8394 & 0.7391 & 0.2272 & 40.16 \\
& Magnitude Pruning~\cite{han2015learningweightsconnectionsefficient,han2015deepcompression}  & 60.0\% & 58.4\% & 0.8237 & 0.7135 & 0.1665 & 43.60 \\
& Zero-only Fisher~\cite{michel2019sixteenheadsreallybetter}     & 60.0\% & 58.4\% & 0.8881 & 0.8124 & 0.3911 & 29.10 \\
& Vanilla Fisher~\cite{theis2018fastergazepredictiondense}       & 60.0\% & 58.4\% & 0.8854 & 0.8080 & 0.3859 & 29.44 \\
& Original Medical Core                                          & 60.0\% & 58.4\% & 0.8870 & 0.8106 & 0.3879 & 28.69 \\
& \textbf{MedCore (ours)}                                        & 60.0\% & 58.4\% & \textbf{0.8908} & \textbf{0.8179} & \textbf{0.4078} & \textbf{27.07} \\
\midrule
\multirow{6}{*}{High: h70\_m70}
& Random Pruning~\cite{li2022revisitingrandomchannelpruning}     & 66.4\% & 68.1\% & 0.8371 & 0.7344 & 0.2121 & 41.59 \\
& Magnitude Pruning~\cite{han2015learningweightsconnectionsefficient,han2015deepcompression}  & 66.4\% & 68.1\% & 0.8200 & 0.7101 & 0.1761 & 44.74 \\
& Zero-only Fisher~\cite{michel2019sixteenheadsreallybetter}     & 66.4\% & 68.1\% & 0.8555 & 0.7630 & 0.2784 & 39.26 \\
& Vanilla Fisher~\cite{theis2018fastergazepredictiondense}       & 66.4\% & 68.1\% & 0.8555 & 0.7634 & 0.2803 & 38.95 \\
& Original Medical Core                                          & 66.4\% & 68.1\% & 0.8558 & 0.7629 & 0.2799 & 36.25 \\
& \textbf{MedCore (ours)}                                        & 66.4\% & 69.0\% & \textbf{0.8738} & \textbf{0.7899} & \textbf{0.3144} & \textbf{31.35} \\
\midrule
\multirow{6}{*}{Extreme: h84\_m95}
& Random Pruning~\cite{li2022revisitingrandomchannelpruning}     & 74.2\% & 75.8\% & 0.8180 & 0.7046 & 0.1468 & 43.95 \\
& Magnitude Pruning~\cite{han2015learningweightsconnectionsefficient,han2015deepcompression}  & 74.2\% & 75.8\% & 0.8161 & 0.7033 & 0.1643 & 45.93 \\
& Zero-only Fisher~\cite{michel2019sixteenheadsreallybetter}     & 74.2\% & 75.8\% & 0.8312 & 0.7258 & 0.2003 & 42.18 \\
& Vanilla Fisher~\cite{theis2018fastergazepredictiondense}       & 74.2\% & 75.8\% & 0.8338 & 0.7298 & 0.2048 & 42.39 \\
& Original Medical Core                                          & 74.2\% & 75.8\% & 0.8346 & 0.7312 & 0.2046 & 42.46 \\
& \textbf{MedCore (ours)}                                        & 74.2\% & 75.8\% & \textbf{0.8492} & \textbf{0.7542} & \textbf{0.2539} & \textbf{41.92} \\
\bottomrule
\end{tabular}
\end{table}

\paragraph{Discussion.} Three observations follow from Table~\ref{tab:pruning_baselines_app}, and each is consistent with the conceptual claims made in the main paper.

First, Magnitude Pruning collapses on boundary metrics across all compression levels (BF1 drops to 0.2081 even at the low compression level h40\_m30, and HD95 exceeds 41). This is the empirical signature predicted by Theorem~3.3: parameter magnitude is uninformative about boundary risk because boundary displacement is determined by the logit perturbation at the boundary, not by the size of the removed weights. A small-magnitude attention head can carry substantial boundary leverage, and Magnitude Pruning is structurally blind to this.

Second, both Zero-only Fisher and Vanilla Fisher noticeably outperform Random and Magnitude Pruning at every level, confirming that a proper second-order importance score is necessary. However, their boundary metrics fall behind MedCore by a margin that widens as compression intensifies---for instance, at the Extreme level, MedCore improves BF1 by $+24.1\%$ over the strongest baseline (Original Medical Core). This trend matches the component ablation in Section~5.1: the boundary-aware Fisher provides direct contour protection, while the reset-to-SAM and distribution-aware aggregation act as safeguards under aggressive compression. When all three components are absent, even Fisher-based methods cannot localize medically critical structures under heavy sparsity.

Third, the monotone widening of MedCore's margin from $+3.5\%$ BF1 at h40\_m30 to $+24.1\%$ at h84\_m95 (relative to the strongest baseline at each level) provides indirect support for the boundary leverage principle as a design lens. As more groups are removed, the remaining structures sit closer to the boundary-critical core, and the cost of misidentifying these structures grows superlinearly. Scoring criteria that lack a boundary-geometry signal incur this cost; MedCore avoids it by construction.

This comparison is not a substitute for the main results in Table~1 of the main paper, which compares against architecturally distinct efficient and medical segmentation models. Rather, it isolates a single design question: among structured pruning criteria operating on the same MedSAM encoder, does boundary-aware adaptation-aware scoring improve compression-time decisions? The evidence in Table~\ref{tab:pruning_baselines_app} suggests that it does, and the improvement is most pronounced precisely where compression is hardest.
\section{Limitations}
\label{app:limitations}

MedCore uses calibration data to estimate Fisher information and boundary leverage. If this calibration set is small or biased, the score may under-protect rare structures. The reset-to-SAM score also assumes that the SAM and MedSAM checkpoints are aligned parameter by parameter. This is true for the standard MedSAM adaptation path, but it may not hold for methods that change the architecture.

The boundary leverage theorem is local. It explains first-order boundary displacement under small logit perturbations. Very aggressive pruning can create non-local effects, such as disconnected masks or missing components, that are not fully captured by the first-order formula. We therefore view boundary leverage as a useful diagnostic and design principle, not as a complete replacement for final BF1 and HD95 evaluation.

Finally, our main experiments focus on 2D prompt-driven segmentation with bounding boxes. Extension to 3D medical segmentation and other prompt types requires additional validation. The same concepts can be applied, but the boundary band, distance metric, and computational cost must be adjusted for 3D volumes.

\end{document}